\theoremstyle{plain}
\newtheorem{theorem}{Theorem}[section]
\newtheorem{proposition}[theorem]{Proposition}
\newtheorem{lemma}[theorem]{Lemma}
\theoremstyle{definition}
\newtheorem{definition}[theorem]{Definition}
\theoremstyle{remark}
\title{Propensity Score Alignment of \\ Unpaired Multimodal Data}
\author{%
  Johnny Xi$^*$ \\
  Department of Statistics\\
  University of British Columbia\\
  Vancouver, Canada\\
  \texttt{johnny.xi@stat.ubc.ca} \\
    \And
    Jana Osea\\
  Valence Labs\\
  Montreal, Canada\\
  \texttt{jana@valencelabs.com} \\
  \And
    Zuheng (David) Xu \thanks{Work done during an internship at Valence Labs} \\
  Department of Statistics\\
  University of British Columbia\\
  Vancouver, Canada\\
  \texttt{zuheng.xu@stat.ubc.ca} \\
  \And
    Jason Hartford \\
  Valence Labs\\
  London, UK\\
  \texttt{jason@valencelabs.com} \\
}
\begin{document}
\maketitle

\begin{abstract}
Multimodal representation learning techniques typically require paired samples to learn shared representations, but collecting paired samples can be challenging in fields like biology, where measurement devices often destroy the samples. This paper presents an approach to address the challenge of aligning unpaired samples across disparate modalities in multimodal representation learning. We draw an analogy between potential outcomes in causal inference and potential views in multimodal observations, allowing us to leverage Rubin's framework to estimate a common space for matching samples. Our approach assumes experimentally perturbed samples by treatments, and uses this to estimate a propensity score from each modality. We show that the propensity score encapsulates all shared information between a latent state and treatment, and can be used to define a distance between samples. We experiment with two alignment techniques that leverage this distance---shared nearest neighbours (SNN) and optimal transport (OT) matching---and find that OT matching results in significant improvements over state-of-the-art alignment approaches in on synthetic multi-modal tasks, in real-world data from NeurIPS Multimodal Single-Cell Integration Challenge, and on a single cell microscopy to expression prediction task.
\end{abstract}

\def\bfA{\mathbf{A}}
\def\bfB{\mathbf{B}}
\def\bfC{\mathbf{C}}
\def\bfD{\mathbf{D}}
\def\bfE{\mathbf{E}}
\def\bfF{\mathbf{F}}
\def\bfG{\mathbf{G}}
\def\bfH{\mathbf{H}}
\def\bfI{\mathbf{I}}
\def\bfJ{\mathbf{J}}
\def\bfK{\mathbf{K}}
\def\bfL{\mathbf{L}}
\def\bfM{\mathbf{M}}
\def\bfN{\mathbf{N}}
\def\bfO{\mathbf{O}}
\def\bfP{\mathbf{P}}
\def\bfQ{\mathbf{Q}}
\def\bfR{\mathbf{R}}
\def\bfS{\mathbf{S}}
\def\bfT{\mathbf{T}}
\def\bfU{\mathbf{U}}
\def\bfV{\mathbf{V}}
\def\bfW{\mathbf{W}}
\def\bfX{\mathbf{X}}
\def\bfY{\mathbf{Y}}
\def\bfZ{\mathbf{Z}}

\def\bbA{\mathbb{A}}
\def\bbB{\mathbb{B}}
\def\bbC{\mathbb{C}}
\def\bbD{\mathbb{D}}
\def\bbE{\mathbb{E}}
\def\bbF{\mathbb{F}}
\def\bbG{\mathbb{G}}
\def\bbH{\mathbb{H}}
\def\bbI{\mathbb{I}}
\def\bbJ{\mathbb{J}}
\def\bbK{\mathbb{K}}
\def\bbL{\mathbb{L}}
\def\bbM{\mathbb{M}}
\def\bbN{\mathbb{N}}
\def\bbO{\mathbb{O}}
\def\bbP{\mathbb{P}}
\def\bbQ{\mathbb{Q}}
\def\bbR{\mathbb{R}}
\def\bbS{\mathbb{S}}
\def\bbT{\mathbb{T}}
\def\bbU{\mathbb{U}}
\def\bbV{\mathbb{V}}
\def\bbW{\mathbb{W}}
\def\bbX{\mathbb{X}}
\def\bbY{\mathbb{Y}}
\def\bbZ{\mathbb{Z}}

\def\calA{\mathcal{A}}
\def\calB{\mathcal{B}}
\def\calC{\mathcal{C}}
\def\calD{\mathcal{D}}
\def\calE{\mathcal{E}}
\def\calF{\mathcal{F}}
\def\calG{\mathcal{G}}
\def\calH{\mathcal{H}}
\def\calI{\mathcal{I}}
\def\calJ{\mathcal{J}}
\def\calK{\mathcal{K}}
\def\calL{\mathcal{L}}
\def\calM{\mathcal{M}}
\def\calN{\mathcal{N}}
\def\calO{\mathcal{O}}
\def\calP{\mathcal{P}}
\def\calQ{\mathcal{Q}}
\def\calR{\mathcal{R}}
\def\calS{\mathcal{S}}
\def\calT{\mathcal{T}}
\def\calU{\mathcal{U}}
\def\calV{\mathcal{V}}
\def\calW{\mathcal{W}}
\def\calX{\mathcal{X}}
\def\calY{\mathcal{Y}}
\def\calZ{\mathcal{Z}}

\def\iid{i.i.d.\ } %
\def\ie{i.e.\ }
\def\eg{e.g.\ }
\newcommand{\kword}[1]{\noindent\textbf{#1}\ \ }
\newcommand{\doint}[1]{\text{do(}#1)}

\def\P{\bbP} %
\def\E{\bbE} %
\newcommand{\conditional}[3][]{\bbE_{#1}\bigCond*{#2}{#3}}
\def\Law{\mathcal{L}} %
\def\indicator{\mathds{1}} %

\def\borel{\calB} %
\def\sigAlg{\calA} %
\def\filtration{\calF} %
\def\grp{\calG} %

\makeatletter
\newcommand{\oset}[3][0.25ex]{%
  \mathrel{\mathop{#3}\limits^{
    \vbox to#1{\kern-2\ex@
    \hbox{$\scriptstyle#2$}\vss}}}}
\makeatother

\def\condind{{\perp\!\!\!\perp}} %
\def\equdist{\oset{\text{\rm\tiny d}}{=}} %
\def\equas{\oset{\text{\rm\tiny a.s.}}{=}} %
\def\simiid{\sim_{\mbox{\tiny iid}}} %

\def\onevec{\mathbf{1}}
\def\iden{\mathbf{I}} %
\def\supp{\text{\rm supp}}

\newcommand{\argdot}{{\,\vcenter{\hbox{\tiny$\bullet$}}\,}} %

\def\UnifDist{\text{\rm Unif}}
\def\BetaDist{\text{\rm Beta}}
\def\ExpDist{\text{\rm Exp}}
\def\GammaDist{\text{\rm Gamma}}
\def\NormDist{\mathcal{N}}

\def\BernDist{\text{\rm Bernoulli}}
\def\BinomDist{\text{\rm Binomial}}
\def\PoissonPlus{\text{\rm Poisson}_{+}}
\def\PoissonDist{\text{\rm Poisson}}
\def\NBPlus{\text{\rm NB}_{+}}
\def\NBDist{\text{\rm NB}}
\def\GeomDist{\text{\rm Geom}}

\def\bff{\mathbf{f}}
\def\bfx{\mathbf{x}}

\def\bftheta{\mathbf{\theta}}

\section{Introduction}

Large-scale multimodal representation learning techniques such as CLIP
\citep{radford2021learning} have lead to remarkable improvements in zero-shot
classification performance and have enabled the recent success in conditional
generative models. %
However, the effectiveness of
multimodal methods hinges on the availability of \emph{paired} samples---such
as images and their associated captions---across data modalities. This reliance
on paired samples is most obvious in the InfoNCE loss \citep{gutmann10a,
oord2018representation} used in CLIP \citep{radford2021learning} which
explicitly learns representations to maximize the true matching between images
and their captions.

\begin{figure*}[t!]
	\centering
	\includegraphics[width=\linewidth]{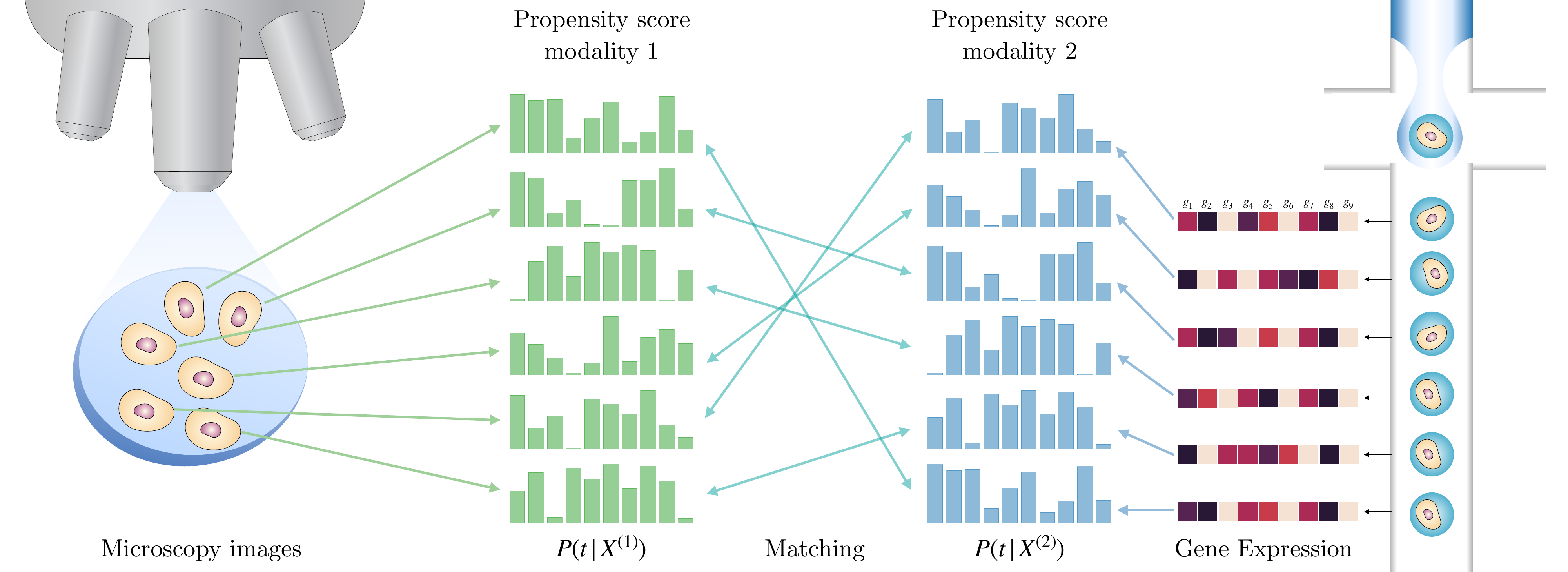}
	\vspace{-2em}
	\caption{
 Visualization of propensity score matching for two modalities (e.g., Microscopy images and RNA expression data). 
 We first train classifiers to estimate the propensity score for samples from each modalities; 
 the propensity score reveals the shared information $p(t|z_i)$, which allows us to re-pair the observed disconnected modalities.
 The matching procedure is then performed within each perturbation class based on the similarity bewteen the propensity scores.
  }
	\label{fig:matching}
\end{figure*}

While paired image captioning data is abundant on the internet, paired
multimodal data is often challenging to collect in scientific experiments.
For instance, unpaired data are the norm in biology
for technical reasons: RNA sequencing, protein expression assays, and the
collection of microscopy images for cell painting assays are all
destructive processes. As such, we cannot collect multiple different
measurements from the same cell, and can only explicitly group cells by their
experimental condition. 
If we could accurately match unpaired samples across modalities, we could use the aligned samples as proxies for paired samples and apply existing multimodal learning
techniques. %

In this paper, we formalize this setting by viewing each modality as a \emph{potential} measurement, $X^{(1)}(Z) \in \mathcal{X}^{(1)}, X^{(2)}(Z) \in \mathcal{X}^{(2)}$, of the same underlying latent state $Z\in \mathcal{Z}$, where we are only able to make a single measurement for each sample unit (e.g. an individual cell). The task is to reconcile (\emph{match}) unpaired observations $x^{(1)}$ and $x^{(2)}$ with the same (or maximally similar) $z$. %
Estimating the latent, $Z$, is hopelessly underspecified without making unverifiable assumptions on the system, and furthermore, $Z$ may still be sparse and high-dimensional, leading to inefficient matching. This motivates the need for approaches that use only the observable data. 

We identify two major challenges for this problem. First, measurements are often made in very different spaces $\calX^{(1)}$ and $\calX^{(2)}$ (e.g., pixel space and gene expression counts), which make defining a notion of similarity across modalities challenging. 
Second, the measurement process inevitably introduces modality-specific variation that can be impossible to disentangle from the relevant information ($Z$). For example in cell imaging, we would not want the matching to depend on irrelevant appearance features such as the orientation of the cell or the lighting of the plate.

In this paper, we address these challenges by appealing to classical ideas from causal inference
\citep{rubin1974estimating}, in the case where we additionally observe some label $t$
for each unit, e.g., indexing an experiment. By making the assumption that $t$ perturbs the observations via their shared latent state, we identify an observable link between modalities with the
same underlying $z$. Under conditions which we discuss in \cref{sec:dgp}, the propensity score, defined as $p(t|Z)$, is a transformation of the latent $Z$ that satisfies three remarkable properties (\cref{prop::ps}): {(1) it provides a common space for matching, (2) it is fully identifiable via classification on individual modalities, and (3) it maximally reduces the dimension of $Z$, retaining only the information revealed by the perturbations}.

The practical implementation of the methodology (as illustrated in \cref{fig:matching}) is then straightforward: 
we train two separate classifiers, one for each modality, to predict the treatment $t$ applied to $X^{(i)}$.
We then match across modalities based on
the similarity between the predicted probabilities (the propensity score) within each treatment group. 
This matching procedure is highly versatile and can be applied to match labeled
observations between any modalities for which a classifier can be efficiently
trained. However, since the same sample unit does not appear in both modalities,
we cannot use naive bipartite matching.
To address this, we use soft matching techniques to estimate the missing modality
for each sample unit by allowing matching to multiple observations. 
We experiment with two recent matching approaches: shared nearest
neighbours (SNN) matching \citep{lance2022multimodal, cao2022multi} and optimal
transport (OT) matching \cite{villani2009}. 

In our experiments, we find
that OT matching with distances defined on the proposenity score leads to
significant improvement on matching and a downstream cross-modality prediction task on both synthetic and real-world biological data. 
Notably, our prediction method, which leverages the soft matching to optimize an OT projected loss, 
outperforms supervised learning on the true pairs on CITE-seq data from the
NeurIPS Multimodal Single-Cell Integration Challenge \citep{lance2022multimodal}. 
Finally, we applied our method to match single-cell expression data (from a PeturbSeq assay \citep{dixit2016perturb}) with single cell crops of image data \citep{Fay2023}. 
We find improved generalization in predicting the distribution of gene expression from the cell imaging data in with unseen perturbations.

\subsection{Related Work}

\paragraph{Unpaired and Multimodal Data} Learning from unpaired data has long been considered for image translation \citep{liu2017unsupervised, zhu2017unpaired,
almahairi2018augmented}, and more recently for biological modality translation
\citep{amodio2018magan, yang2021multi}. In particular, \cite{yang2021multi}
also takes the perspective of a shared latent
variable for biological modalities. This setting has been studied more generally for
multi-view representation learning \citep{gresele2020incomplete,
sturma2023unpaired} for its identifiability benefits. 

\kword{Perturbations and Heterogeneity} Many methods in biology treat observation-level
heterogeneity as a nuisance dimension to globally integrate, even when cluster
labels are observed \citep{butler2018integrating, korsunsky2019fast,
Foster2022}. This is sensible when clusters correspond to noise rather than the
signal of interest. However, it is well known in causal representation learning
that heterogeneity---particularly heterogeneity arising from
perturbations---has theoretical benefits in constraining the solution set \citep{Khem2020a, squires2023linear,
ahuja2023interventional, buchholz2023learning, von2023nonparametric}. There,
the benefits (weakly) increase with the number of perturbations, which is also
true of our setting  (\cref{prop:dimensionality}). In the context of
unpaired data, only \citet{yang2021multi} explicitly leverage this heterogeneity in their
method, while \citet{ryu2024cross} treat it as a constraint in solving OT. Specifically, \citet{yang2021multi} require their VAE representations to classify experimental
labels in addition to reconstructing modalities, while our method is simpler,
only requiring the classification objective. Notably, \citet{yang2021multi}
treat our objective as a regularizer, but our theory suggests that it is actually
primarily responsible for the matching performance. 
Our experiment results coincide with the theoretical insights;
requiring reconstruction, as in a VAE, led to worse matching performance with identical model architectures.

\kword{Optimal Transport Matching} OT is a common tool in single-cell biology. 
In cell trajectory inference, the unpaired samples are gene expression values measured at different
time points in a shared (metric) space. OT matching minimizes this shared metric between time points \citep{schiebinger2019optimal, tong2020trajectorynet}.
Recent work \citep{demetci2022scot} extends this to our setting
where each modality is observed in separate metric spaces by using the
Gromov-Wasserstein distance, which computes the difference between the metric
evaluated within pairs of points from each modality \citep{demetci2022scot}. In concurrent work, this approach was recently extended to ensure matching within experimental labels \citep{ryu2024cross}. %
In addition to these ``pure'' OT approaches, \citet{gossi2023matching} use OT on
contrastive learning representations, though this approach requires matched
pairs for training, while \citet{cao2022unified} use OT in the latent space of
a multi-modal VAE.

\section{Setting}

\label{sec:dgp}

We consider the setting where 
there exist two potential views, $X^{(e)} \in \mathcal{X}^{(e)}$ from two different modalities indexed by $e \in \{1, 2\}$, and experiment $t$ that perturbs a shared latent state of these observations. This defines a jointly distributed random variable $(X^{(1)}, X^{(2)}, e, t)$, from which we observe only a single modality, its index, and label, $\{x_i^{(e_i)}, e_i, t_i\}_{i=1}^n$.\footnote{We will denote random variables by upper-case letters, and samples by their corresponding lower-case letter.} We aim to match or estimate the samples from the missing modality,
which corresponds to the realization of the missing random variable.
Since $t$ is observed, in practice we match observations \emph{within} the same label class $t$.

Formally, we assume each modality is generated by a common latent random variable $Z$ as follows:
\begin{align}
    \label{eqn::dgp}
    &t \sim P_T, \  Z^{(t)} \mid t \sim P_Z^{(t)}, \  U^{(e)} \sim P_{U}^{(e)}, \   U^{(e)} \condind Z,  \  U^{(e)} \condind U^{(e')}, \  X^{(e)} \mid t = f^{(e)}(Z^{(t)}, U^{(e)}),
\end{align}
where $t$ indexes the experimental perturbations, and we take $t = 0$ to represent a base environment. $U^{(e)}$ represents the modality-specific measurement noise that is unperturbed by $t$, and also independent across samples. The structural equations $f^e$ are deterministic after accounting for the randomness in $Z$ and $U$: it represents the measurement process that captures the latent state. For example, in a microscopy image, this would be the microscope and camera that maps a cell to pixels. 

\paragraph{Comparison to Multimodal Generative Models} 

Our setting is technically that of a multimodal generative model with latent perturbations. However, by focusing on matching rather than generation, we are able to make significantly weaker and more meaningful assumptions while still ensuring the theoretical validity of our method. Without the effects of the perturbation, our \cref{eqn::dgp} is essentially the same as \citep[Equation 1]{yang2021multi} in an abstract sense. However, in order to fit the generative model, it is required to formulate explicit models over $f^{(e)}$ and $P_{Z}^{(t)}$, which requires specifying the function class (e.g., continuous) and the space of $Z$ (e.g., $\bbR^d$) as assumptions, even in universal approximation settings. In contrast, since we will not directly fit the model \cref{eqn::dgp}, we do not make any technical assumptions about the generative model. Instead, we will make the following assumptions on the underlying data generating process itself.

\kword{Key Assumptions} Our theory makes the following assumptions about the data generating process.
\begin{itemize}
    \item[(A1)] $t \not\!\!\!\condind Z$, and $t \condind U^{(e)}$. In words, $t$ has a non-trivial effect on $Z$, but does not affect $U^{(e)}$, implying that interventions target the common underlying process without affecting modality-specific properties. For example, an intervention that affects the underlying cell state, but not the measurement noise of the individual modalities.
    \item[(A2)] Injectivity of $f^{(e)}$: $f^{(e)}(z, u) = f^{(e)}(z', u') \implies (z, u) = (z', u')$. In words, each modality captures enough information to distinguish changes in the underlying state.\footnote{Note that the injectivity is in the sense of $f$ as a function of both $u$ and $z$, which allows observations that have a shared $z$ but differ by their value in $u$, and the function remains injective. For example, rotated images with the exact same content can have a shared $z$, but remain injective due to the rotation being captured in $u$.}
\end{itemize}

(A1) ensures that the conditional distribution $t \mid X^{(e)}$ is identical for $e = 1, 2$. 
(A2) then ensures that $t \mid X^{(1)} \equdist \ t \mid X^{(2)} \equdist \ t \mid Z$,
which allows us to estimate the conditional distribution $t \mid Z$ with observed data alone.
Though sharp assumptions are required for the theory, versions replaced with approximate distributional equalities intuitively also allow for effective matchings when combined with our soft matching procedures in practice. 
A particular relaxation of (A1) when combined with OT matching is described in \cref{sec:relaxing}.

\section{Multimodal Propensity Scores}

Under \eqref{eqn::dgp}, if $Z$ were observable, an optimal matching can be constructed by simply matching the samples with the most similar $z_i$. However, the prerequisite of inverting the model and disentangling $Z$ is arguably more difficult than the matching problem itself. In particular, $Z$ is unidentifiable without strong assumptions on \cref{eqn::dgp} \citep{xi2023indeterminacy}, and even formulating the identifiability problem requires well-specification of the model as a prerequisite. We take an alternative approach that is robust to these problems, by using the perturbations $t$ as an observable link to reveal information about $Z$. Specifically, we show that the propensity score
\begin{align}
    \pi(z):=P(t | Z = z) \in [0,1]^{T+1},
\end{align} 
is identifiable as a proxy for the latent $Z$ under our assumptions of the data generating process. This is a consequence of the injectivity of $f^{(e)}$, since it will be that $\pi(Z) = \pi(X^{(e)})$, $e = 1, 2$, indicating that we can compute it from either modality. Not only does the propensity score reveal shared information, classical causal inference theory \citep{rubin1974estimating} states that it captures \textit{all} information about $Z$ that is contained in $t$, and does so minimally, in terms of having minimum dimension and entropy. Since $t$ contains the only observable information that is useful for matching, the propensity score is hence an optimal compression of the observed information. We collect these observations into the following proposition. 
\begin{proposition}
    \label{prop::ps}
    In the model described by %
    \cref{eqn::dgp}, further assume that $f^{(e)}$ are injective for $e = 1, 2$. Then, the propensity scores in either modality is equal to the propensity score given by $Z$, i.e., $\pi(X^{(1)}) = \pi(X^{(2)}) = \pi(Z)$ as random variables. This implies
    \begin{align}
        I(t, Z \mid \pi(Z)) = I(t, Z \mid \pi(X^{(e)})) = 0,
    \end{align}
    for each $e = 1,2$, where $I$ is the mutual information. Furthermore, any other function $b(Z)$ satisfying $I(t, Z \mid b(Z)) = 0$ is such that $\pi(Z) = f(b(Z))$.
\end{proposition}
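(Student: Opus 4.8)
The plan is to establish the three assertions in order, in each case translating the mutual‑information statement into a conditional independence and then invoking, respectively, injectivity of $f^{(e)}$, the generative structure of \cref{eqn::dgp}, and the classical balancing‑score calculus of Rosenbaum and Rubin.

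\emph{Step 1: the three propensity scores coincide.} First I would note that injectivity of $f^{(e)}$ in $(z,u)$ jointly furnishes a measurable left inverse on its range, so $Z$ is a deterministic measurable function of $X^{(e)}$ and hence $\sigma(Z) \subseteq \sigma(X^{(e)})$. (For Borel $f^{(e)}$ between standard Borel spaces this is the Lusin--Souslin theorem; I would cite it rather than reprove it, and flag standard‑Borel‑ness of the relevant spaces as a standing regularity assumption.) Next I would verify $t \condind X^{(e)} \mid Z$: conditionally on $Z = z$ we have $X^{(e)} = f^{(e)}(z, U^{(e)})$, a function of $U^{(e)}$ alone, and $U^{(e)} \condind t \mid Z$ follows from the way \cref{eqn::dgp} sequences its draws (the $U^{(e)}$ are exogenous to the $(t, Z)$ block) together with (A1). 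Because $\sigma(Z) \subseteq \sigma(X^{(e)})$, this conditional independence upgrades to $P(t \mid X^{(e)}) = P(t \mid Z)$ a.s., i.e., $\pi(X^{(e)}) = \pi(Z)$ as random variables for $e = 1, 2$.

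\emph{Steps 2 and 3: the balancing‑score facts.} Since conditional mutual information vanishes exactly when the corresponding conditional independence holds, $I(t, Z \mid \pi(Z)) = 0$ reduces to $t \condind Z \mid \pi(Z)$, which I would prove with the balancing‑score identity: as $\pi(Z)$ is $\sigma(Z)$‑measurable, $P(t = k \mid Z, \pi(Z)) = P(t = k \mid Z) = \pi(Z)_k$, whereas the tower property gives $P(t = k \mid \pi(Z)) = \mathbb{E}[\pi(Z)_k \mid \pi(Z)] = \pi(Z)_k$; equating the two for every $k$ yields the claim, and feeding in $\pi(X^{(e)}) = \pi(Z)$ from Step 1 gives $I(t, Z \mid \pi(X^{(e)})) = 0$ too. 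For minimality, given $b$ with $I(t, Z \mid b(Z)) = 0$, i.e., $t \condind Z \mid b(Z)$, I would run this computation in reverse: $\pi(Z)_k = P(t = k \mid Z) = P(t = k \mid Z, b(Z)) = P(t = k \mid b(Z))$, and the right‑hand side is $\sigma(b(Z))$‑measurable, so by Doob--Dynkin it equals $f_k(b(Z))$ for a measurable $f_k$; stacking the components gives $\pi(Z) = f(b(Z))$.

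Everything after Step 1 is the standard propensity‑score toolkit and should be routine. The main obstacle is Step 1: extracting $\sigma(Z) \subseteq \sigma(X^{(e)})$ cleanly from injectivity (this is where an implicit regularity hypothesis on $f^{(e)}$ and the spaces is genuinely needed), and pinning down $t \condind U^{(e)} \mid Z$ rigorously from the sampling order in \cref{eqn::dgp} rather than merely from the marginal independences that (A1) lists. I would therefore make the standard‑Borel assumption explicit and spell out the exogeneity of the $U^{(e)}$ before using it.
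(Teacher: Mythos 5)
Your proof is correct and takes essentially the same route as the paper's: injectivity lets you replace conditioning on $X^{(e)}$ by conditioning on the latents (you phrase this as $\sigma(Z)\subseteq\sigma(X^{(e)})$ via a measurable inverse, the paper argues pointwise via the unique preimage $(z,u^{(e)})$), the exogeneity of $U^{(e)}$ then collapses $\pi(X^{(e)})$ to $\pi(Z)$, and the remaining claims are exactly the Rosenbaum--Rubin balancing-score facts, which the paper cites while you re-derive them with the tower property and Doob--Dynkin. Your added care about the measurable inverse and about obtaining $t \condind U^{(e)} \mid Z$ from the sampling structure (rather than from the marginal independences in (A1), which alone would not suffice) tightens steps the paper asserts informally, but it is the same argument.
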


The proof can be found in \cref{sec:proof}. Practically, \cref{prop::ps} shows that computing the propensity score on either modality is equivalent to computing it on the unobserved shared latent, which means that it is identifiable, and thus estimable, from the observations alone. Furthermore, the estimation does not require modified objectives or architectures for joint multimodal processing, instead they are simple and separate classification problems for each modality. Finally, $t$ does not affect $U^{(e)}$ by assumption, and thus the propensity score, being a representation of the information in $t$, discards the modality-specific information that may be counterproductive to matching. Therefore, even if $Z$ were observed, it may be sensible to match on its propensity score instead.

\kword{Number of Perturbations} Note that point-wise equality of the propensity score $\pi(z_1) = \pi(z_2)$ does not necessarily imply equality of the latents $z_1 = z_2$, due to potential non-injectivity of $\pi$. For example, consider $t \in \{0,1\}$, then $\pi(z)$ is a compression to a single dimension $z \to p(t = 1 \mid z)$. Intuitively, collecting data from more perturbations improves the amount of information contained in the label $t$. If the latent space is $\bbR^d$, the propensity score necessarily compresses information about $Z^{(t)}$ if the latent dimension exceeds the number of perturbations, echoing impossibility results from the causal representation learning literature \citep{squires2023linear}. 

\begin{proposition}
    \label{prop:dimensionality}
    Let $Z^{(t)} \in \bbR^{d}$. Suppose that $P_Z^{(t)}$ has a smooth density $p(z|t)$ for each $t = 0, \dots, T$. Then, if $T < d$, the propensity score $\pi$, restricted to its strictly positive part, is non-injective. 
\end{proposition}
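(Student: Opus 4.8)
The plan is to observe that, on the open set where it is well defined, $\pi$ is a smooth (hence continuous) map from a $d$-dimensional domain into the $T$-dimensional probability simplex, and then to invoke Brouwer's invariance of domain: a continuous injection from a nonempty open subset of $\bbR^d$ cannot land in (a copy of) $\bbR^T$ when $T<d$.

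First I would fix the domain. With $p(z):=\sum_{t=0}^{T}P(T=t)\,p(z\mid t)$ the marginal density of $Z$, Bayes' rule gives $\pi(z)=(\pi_0(z),\dots,\pi_T(z))$ where $\pi_t(z)=P(T=t)\,p(z\mid t)/p(z)$. Let $\Omega:=\{z\in\bbR^d: p(z)>0\}$. Smoothness of the $p(\cdot\mid t)$ makes $p$ continuous, so $\Omega$ is open; and $\int p=1$ forces $\Omega\neq\emptyset$. This set contains the support of $Z$, and the ``strictly positive part'' of $\pi$ (where every coordinate is $>0$) is an open subset of $\Omega$, so it suffices to argue on any nonempty open $\Omega'\subseteq\Omega$ on which $\pi$ is defined. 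On $\Omega$, $\pi$ is a ratio of smooth functions with nonvanishing denominator, hence $\pi\in C^\infty(\Omega)$, and in particular continuous.

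Next, since the coordinates of $\pi$ sum to one, $\pi$ is determined by $\tilde\pi:=(\pi_1,\dots,\pi_T):\Omega\to\bbR^T$, and $\tilde\pi$ is injective iff $\pi$ is. Assume for contradiction that $\pi$ is injective on $\Omega$; then so is the continuous map $g:=\iota\circ\tilde\pi:\Omega\to\bbR^d$, where $\iota:\bbR^T\hookrightarrow\bbR^d$ pads with $d-T\ge 1$ zeros. Invariance of domain then forces $g(\Omega)$ to be open in $\bbR^d$, yet $g(\Omega)\subseteq\bbR^T\times\{0\}^{d-T}$, which has empty interior since $T<d$ --- a contradiction. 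Hence $\pi$ is non-injective on $\Omega$, and by the same argument on its strictly positive part, proving the proposition.

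There is no genuinely hard step here; the proposition is essentially a dimension count. The only points requiring care are (i) pinning down a nonempty \emph{open} domain on which $\pi$ is continuous --- which is exactly why smoothness of the densities and positivity of $p$ are assumed --- and (ii) correctly identifying the target as effectively $\bbR^T$ via the affine constraint $\sum_t\pi_t\equiv 1$. If one prefers to avoid invariance of domain, smoothness supplies a substitute on the non-degenerate part: $D\tilde\pi$ has rank $\le T<d$ everywhere, so when $\tilde\pi(\Omega)$ has positive Lebesgue measure, Sard's theorem yields a regular value whose fibre is a submanifold of dimension $d-T\ge 1$, hence not a single point; only the (degenerate) Lebesgue-null image case would still need the topological argument.
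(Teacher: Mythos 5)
Your proof is correct, and it follows the same overall dimension-counting strategy as the paper: use the simplex constraint to view the (strictly positive) propensity score as a map into $\bbR^T$, then invoke a ``no injection into lower dimension'' theorem. The differences are in the two ingredients. For the reduction, the paper composes $\pi$ with the log-ratio map $h(\pi)_i=\log(\pi_i/\pi_0)$, which is chosen not just for bijectivity but to expose the expression $g(z)$ in \cref{eqn::injective}, on which the subsequent discussion of when injectivity \emph{can} hold is based; you simply drop the $0$-th coordinate, which is leaner but loses that expository link. For the key lemma, the paper uses smoothness of $h\circ\pi:\bbR^d\to\bbR^T$ and cites the nonexistence of smooth injections into lower dimension, whereas you use Brouwer's invariance of domain, which needs only continuity of $\pi$ --- so your argument is marginally more general (smoothness of the densities is only used to get continuity) at the cost of a heavier topological tool. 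You are also more careful than the paper about the domain, checking that $\{p>0\}$ is open and nonempty and that the strictly positive part is open; both proofs share the implicit assumption that this strictly positive set is nonempty (otherwise the claim is vacuous), so this is not a gap relative to the paper. Your closing Sard-based alternative is, as you note, incomplete in the measure-zero-image case (the clean smooth route is rather a maximal/constant-rank argument), but since it is offered only as an aside it does not affect the validity of your main argument.
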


The proof can be found in \cref{sec:proof}. Note the above only states an impossibility result when $T<d$. More generally, it can be seen from the proof of \cref{prop:dimensionality} that the injectivity of the propensity score depends on the injectivity of the following expression in $z$:
\begin{align}
    \label{eqn::injective}
    g(z) = \begin{bmatrix} 
    \log(p(z|t=1)) - \log(p(z|t=0)) \\
    \vdots \\
    \log(p(z|t=T)) - \log(p(z|t=0)) 
    \end{bmatrix},
\end{align}
which then depends on the latent process itself. If the above mapping is non-injective, 
this represents a fundamental indeterminacy that cannot be resolved without making strong assumptions on point-wise latent variable recovery.
As we have already established in \cref{prop::ps}, the propensity score contains the maximal shared information across modalities.
Nonetheless, collecting data form a larger number of perturbations is clearly beneficial for matching, since $g$ in \cref{eqn::injective} is injective if any of the subset of its entries are.

\section{Estimation and Matching}

For the remainder of the paper, we drop the notation $e$ and use $(x_i, t_i)$ to denote observations from modality 1, and $(x_j, t_j)$ to denote
observations from modality 2. Given a multimodal dataset with observations $\{(x_i, t_i)\}_{i=1}^{n_1}$ and
$\{(x_j, t_j)\}_{j=1}^{n_2}$, we wish to compute a matching matrix (or
coupling) between the two modalities. We define a $n_1 \times n_2$ matching
matrix $M$ where $M_{ij}$ represents the likelihood of $x_{i}$ being matched to
$x_{j}$. Since $t$ is observed, we always perform matching only within observations with the same value of $t$, so that in practice we obtain a matrix $M_t$ for each $t$.

Our method approximates the propensity scores by training separate classifiers
that predicts $t$ given $x$ for each modality. We denote the estimated
propensity score by $\pi_i$ and $\pi_j$ respectively, where
\begin{align}
	\pi_i
	\approx \pi(x_i) = P(T = t \mid X_i^{(e)} = x_i).
\end{align} 
This yields the transformed datasets $\{\pi_i\}_{i=1}^{n_1}$ and
$\{\pi_j\}_{j=1}^{n_2}$, where $\pi_i$, $\pi_j$ are in the $T$ dimensional
simplex. We use this correspondence to compute a cross-modality distance
function: 
\begin{align} 
d(x_i, x_j) := d'(\pi_i, \pi_j). 
\end{align} 
In practice, we typically compute the Euclidean distance in $\bbR^{T}$ of
the logit-transformed classification scores, but any metric over a bijective transformation of the propensity scores are also theoretically valid. Given this distance function,
we use existing matching techniques to constructing a matching matrix.
In our experiments, we found that OT matching gave the best performance, but we
also evaluated Shared Nearest Neighbour matching; details of the latter can
be found in Appendix \ref{sec:shared-nn}.

\paragraph{Optimal Transport Matching}

The propensity score distance allows us to easily compute a cost function
associated with transporting mass between modalities, $c(x_i, x_j) =
	d'(\pi_i, \pi_j)$. Let $p_1, p_2$ denote the uniform distribution over
$\{\pi_i\}_{i=1}^{n_1}$ and $\{\pi_j\}_{j=1}^{n_2}$ respectively. Discrete
OT aims to solve the problem of optimally redistributing mass from $p_1$ to
$p_2$ in terms of incurring the lowest cost. 
Let $C_{ij} = c(x_i, x_j)$ denote the $n_1 \times n_2$ cost matrix. The Kantorovich
formulation of optimal transport aims to solve the following constrained
optimization problem: 
\begin{align} 
    \min_{M} \sum_{i}^{n_1}\sum_j^{n_2} C_{ij}M_{ij}, \quad M_{ij} \geq 0,  \quad M\mathbf{1} = p_1, \quad M^\top \mathbf{1} = p_2.
\end{align} 
This is a linear program, and for $n_1 = n_2$, it can be shown that the optimal solution is a bipartite matching between
$\{\pi_i\}_{i=1}^{n_1}$ and $\{\pi_j\}_{j=1}^{n_2}$. We refer to this as exact
OT; in practice we add an entropic regularization term, resulting in a soft
matching, that ensures smoothness and uniqueness, and can be solved efficiently
using Sinkhorn's algorithm. Entropic OT takes the following form: 
\begin{align}
	\min_{M} \sum_{i}^{n_1}\sum_j^{n_2} C_{ij}M_{ij} - \lambda H(M), \quad M_{ij} \geq 0, \quad M\mathbf{1} = p_1, \quad M^\top \mathbf{1} = p_2, 
\end{align} 
where $H(M) = - \sum_{i,j} M_{ij} \log(M_{ij})$, the entropy of the
joint distribution implied by $M$. This approach regularizes towards a higher
entropy solution, which has been shown to have statistical benefits
\citep{genevay2018learning}, but nonetheless for small enough $\lambda$ serves
as a computationally appealing approximation to exact OT.

\section{Downstream Tasks}

The matching matrix $M$ can be seen as defining an empirical joint distribution over the samples in each modality. The OT approach in particular makes this explicit. Each row is proportional to the probability that each sample $i$ from modality (1) is matched to sample $j$ in modality (2), i.e., $M_{i, j} = P(x_j | x_i)$. We can thus use $M$ to obtain pseudosamples for any learning task that uses paired samples by $(x_i, \hat{x}_j)$, where $\hat{x}_j$ is obtained by sampling from the conditional distribution defined by $M$, or by a suitable conditional expectation, e.g., the barycentric projection (conditional mean) as $E_{M}\left[X_j \mid X_i = x_i \right] = \sum_{j} M_{i,j} x_j$. In what follows, we describe a cross-modality prediction method based on both barycentric projection and stochastic gradients according to $M_{i,j}$.

\paragraph{Cross-modality prediction} \label{sec::unbiased} 
We can use the matching matrix to design a method for cross-modality prediction/translation. The following MSE loss corresponds to constructing a prediction function $f_{\theta}$ such that the barycentric projection $E_{M}\left[ f_{\theta}(X_j) \mid X_i = x_i\right]$, under $M$ minimizes the squared error for predicting $x_i$: 
\begin{align} 
    \mathcal{L}(\theta) := \sum_{i}(x_i - \sum_{j} M_{i,j} f_\theta(x_j))^{2}. 
\label{eqn::loss}
\end{align} 
However, this
requires evaluating $f_\theta$ for all $n_2$ examples from modality (2) for
each of the $n_1$ examples in modality (1). In practice, we can avoid this cost
with stochastic gradient descent by sampling from modality $(2)$ via
$M_{i\cdot}$ for each training example $(1)$. To obtain an unbiased estimate
of $\nabla_\theta \mathcal{L}$, we need two independent samples from modality (2) for each
sample from modality (1), 
\begin{align} 
    \nabla \mathcal{L}(\theta) \approx & -2\left(x_i -  f_\theta(\dot{x}_j)\right)\nabla_\theta f_\theta(\ddot{x}_j)  \quad \dot{x}_j, \ddot{x}_j \sim P(x_j | x_i). \label{eqn::gradtheta}
\end{align}
By taking two samples as in \cref{eqn::gradtheta},  we get an
unbiased estimator of $\nabla \mathcal{L}(\theta)$, whereas a single sample
would have resulted in optimizing an upper-bound on equation (\ref{eqn::loss});
for details, see \citet{hartford2017deep} where a similar issue arises in the
gradient of their causal effect estimator. We thus refer to prediction models trained via \cref{eqn::gradtheta} as \emph{unbiased}.

\section{Experiments}

We present a comprehensive evaluation of our proposed methodology on three distinct datasets: (1) synthetic paired images, (2) single-cell CITE-seq dataset (simultaneous measurement of single-cell RNA-seq and surface protein measurements) \citep{stoeckius2017simultaneous}, and (3) Perturb-seq and single-cell image data. In the first two cases, there is a ground-truth matching that we use for evaluation, but samples are randomly permuted during training. This allows us to exactly compute the quality of the matching in comparison to the ground truth. The final dataset is a more realistic setting where ground truth paired samples do not exist, and matching becomes necessary in practice. In this case, we compute distributional metrics to compare our proposed methodology against other baselines. 

\paragraph{Experimental Details} All models for the experiments are implemented using \texttt{Torch v2.2.2} \citep{paszke2017automatic} and \texttt{Pytorch Lightning v2.2.4} \citep{pytorch_lightning}. The classifier used to estimate the propensity score is always a linear head on top of an encoder $E_i$, which is specific to each modality and dataset. All models are saved at the optimal validation loss to perform subsequent matching. Shared nearest neighbours (SNN) is implemented using \texttt{scikit-learn v1.4.0} \citep{scikit-learn} using a single neighbour, and OT is implemented using the Sinkhorn algorithm as implemented in the \texttt{pot v0.9.3} package \citep{pot}. Both SNN and OT use the Euclidean distance as the metric. Whenever random variation can affect the results of the experiments, we report quantiles corresponding to variation from different random seeds. Additional experimental details are provided in \cref{sec:exptdetails}. 

\paragraph{Description of Baselines} Our main baseline, which we evaluate against on all three datasets, is matching using representations learned by the multimodal VAE of \citet{yang2021multi}, which is the only published method that is able to leverage perturbation labels for unpaired multimodal data (they refer to the labels as ``prior information''). The standard multimodal VAE loss is a reconstruction loss based on encoder and decoders $E_i$, $D_i$ for each modality, plus a latent invariance loss that aims to align the modalities in the latent space. In our setting, the multimodal VAE loss further includes an additional label classification loss from the latent space of each modality, i.e., encouraging the encoder to simultaneously learn $P(t \mid E_i(x_i))$. This additional objective, which acts as a regularizer for the multimodal VAE, is exactly the loss for our proposed method. To ensure a fair comparison, we always use the same architecture in the encoders $E_i$ of multimodal VAE and in our propensity score classifier. The performance differences between propensity score matching and multimodal VAE then represent the effects of the VAE reconstruction objective and latent invariance objectives. For additional baselines, we also compare against a random matching, where the samples are matched with equal weight within each perturbation as a sanity check. For datasets (1) and (2), we also compare against Gromov-Wasserstein OT (SCOT) \citep{demetci2022scot} computed separately within each perturbation. SCOT uses OT directly by computing a cost function derived based on pairwise distances within each modality, thus learning a local description of the geometry which can be compared between modalities. For the CITE-seq dataset, we also compare against matching using a graph-linked VAE, scGLUE \citep{cao2022multi}, where the graph is constructed from linking genes with the associated proteins. 

\paragraph{Evaluation Metrics} We use the known ground truth matching to compute performance metrics on datasets (1) and (2). The trace and FOSCTTM \citep{liu2019} measure how much weight $M$ places on the true pairing. However, this is not necessarily indicative of downstream performance as similar, but not exact matches are penalized equally to wildly incorrect matches. For this reason, we also measure the latent MSE for dataset (1) and the performance of a CITE-seq gene--to--protein predictive model based on the learned matching for dataset (2). For more details, see \cref{sec:eval}.
 
\subsection{Experiment 1: Synthetic Interventional Images}

\kword{Data} We followed the data generating process
\cref{eqn::dgp} with a latent variable $Z$ encoding the coordinates of two objects. Perturbations represent different do-interventions on the different dimensons of $Z$. The difference between modalities corresponds to whether the objects are circular or square, and a fixed transformation of $Z$, while the modality-specific noise $U$ controls background distortions.

\kword{Model and Evaluation} We used a convolutional neural network adapted from \citet{yang2021multi} as the encoder. We report two evaluation metrics: (1) the trace metric, and (2) the MSE between the matched and the true latents. The latent MSE metric does not penalize close neighbours of the true match (i.e. examples for which $\|z_i - z_i^*\|$ is small) as heavily as the trace metric. These ``near matches'' will typically still be useful on downstream multimodal tasks.

\kword{Results} In Table \ref{tab:alignment}, metrics are computed on a held out test set over 12 groups corresponding to interventions on the latent position, with approximately 1700 observations per group. A random matching, with weight $1/n$, will hence have a trace metric of of $1/1700 \approx 0.588 \times 10^{-3}$.
This implies, for example, that the median performance of PS+OT is approximately 31 times that of random matching. On both metrics, we found that propensity scores matched with OT (PS + OT) consistently outperformed other matching methods on both metrics.

\begin{table}[t]
    \centering
    \footnotesize %
    \setlength{\tabcolsep}{2pt} %
    \begin{tabular}{lcccc}
        \multicolumn{1}{c}{} & \multicolumn{2}{c}{\textbf{Synthetic Image Data}} & \multicolumn{2}{c}{\textbf{CITE-seq Data}} \\
        \cmidrule(lr){2-3} \cmidrule(lr){4-5}
        \multirow{2}{*}{\textbf{Method}} & \textbf{MSE ($\downarrow$)} & \textbf{Trace ($\uparrow$)} & \textbf{FOSCTTM ($\downarrow$)} & \textbf{Trace ($\uparrow$)} \\
        & \textbf{Med (Q1, Q3)} & \textbf{Med (Q1, Q3) $\times 10^{-3}$} & \textbf{Med (Q1, Q3)} & \textbf{Med (Q1, Q3)} \\
        \midrule
        PS+OT & \textbf{0.0316} & \textbf{18.329} & \textbf{0.3049} & \textbf{0.1163} \\
        \multicolumn{1}{r}{} & \textbf{(0.0300, 0.0330)} & \textbf{(17.068, 18.987)} & \textbf{(0.3008, 0.3078)} & \textbf{(0.1093, 0.1250)} \\
        VAE+OT & 0.0324 & 7.733 & 0.3953 & 0.0814 \\
        \multicolumn{1}{r}{} & (0.0316, 0.0350) & (7.473, 7.794) & (0.3912, 0.4045) & (0.0777, 0.8895) \\
                PS+SNN & 0.0552 & 7.924 & 0.3126 & 0.0941 \\
        \multicolumn{1}{r}{} & (0.0530, 0.0558) & (7.569, 9.504) & (0.3121, 0.3160) & (0.0880, 0.0989) \\
        VAE+SNN & 0.0622 & 3.116 & 0.3816 & 0.0612 \\
        \multicolumn{1}{r}{} & (0.0571, 0.0676) & (2.818, 3.213) & (0.3760, 0.3822) & (0.0588, 0.0634) \\
        SCOT & 0.0354 & 0.5964 & 0.4596 & 0.0200 \\
        GLUE+SNN & - & - & 0.4412 & 0.0362 \\
        GLUE+OT & - & - & 0.5309 & 0.0323 \\
        Random & 0.0709 & - & - & - \\
        \multicolumn{1}{r}{} & (0.0707, 0.0714) & & & \\
    \end{tabular}
    \caption{Alignment metrics results using synthetic interventional image dataset and CITE-seq data.}
    \label{tab:alignment}
\end{table}

\subsection{Experiment 2: CITE-Seq Data}

\kword{Data} We used the CITE-seq dataset from the NeurIPS 2021 Multimodal single-cell data integration competition \citep{lance2022multimodal}, consisting of paired RNA-seq and surface level protein measurements over $45$ cell types. In the absence of perturbations, we used the cell type as the observed label to classify and match within. Note the cell types are determined by consensus by pooling annotations from marker genes/proteins. In most cells, the annotations from each modality agreed, suggesting that the label is independent from the modality-specific noise. We used the first 200 principal components as the gene expression modality, and normalized (but otherwise raw) protein measurements as input.

\kword{Model and Evaluation} We used fully-connected MLPs as encoders. To assess matching, we report (1) the trace, and (2) the Fraction Of Samples Closer Than the True Match (FOSCTTM) (\citep{demetci2022scot}, \citep{liu2019}) (lower is better, 0.5 corresponds to random guessing). To evaluate against a downstream task, we also compared the performance of random and VAE matching procedures, as well as directly using the ground truth ($M_{ii} = 1$), on predicting protein levels from gene expression. We trained a 2-layer MLP (the same architecture for all matchings) with both MSE loss and the unbiased procedure as described in \cref{sec::unbiased} using pseudosamples sampled according to the matching matrix. We evaluated the predictive models against ground truth pairs by computing the prediction $R^2$ (higher is better) on a held-out, unpermuted, test set.

\kword{Results} 
In Table \ref{tab:alignment}, metrics are computed on a held-out test set averaged over 45 cell types with varying observation counts per group. While interpreting the average trace can be challenging due to group size variations, OT matching on PS consistently outperformed other methods both within and across groups. In these experiments, OT matching on PS was consistently the top performer, often followed by SNN matching on PS or OT matching on VAE embeddings. 

We present downstream task performance in \cref{tab:cross-modal-pred}. Note that $R^2$ is computed using the sample average across possibly multiple cell types, which explains why random matching within each cell type results in non-zero $R^2$ (see \cref{sec:eval}). We found that PS + OT matching outperforms other methods on this task. Surprisingly, the PS + OT prediction model performed even better on average than training with the standard MSE loss on ground truth pairings (though confidence intervals overlap). This highlights the potential benefit of soft (OT) matching as a regularizer, beyond that of simply reconciling most likely pairs: the soft matching effectively averages over modality specific variation from samples with similar latent states in a manner analogous to data augmentation (with an unknown group action).

\subsection{Experiment 3: PerturbSeq and Single Cell Images}
\begin{table}[t]
    \centering
    \footnotesize %
    \setlength{\tabcolsep}{3pt} %
    \begin{tabular}{lcccc}
        \multicolumn{1}{c}{} & \multicolumn{2}{c}{\textbf{CITE-seq Data}} & \multicolumn{2}{c}{\textbf{PerturbSeq/Single Cell Image Data}} \\
        \cmidrule(lr){2-3} \cmidrule(lr){4-5}
        \textbf{Method} & \textbf{MSE Loss} & \textbf{Unbiased Loss} & \textbf{In Distribution} & \textbf{Out of Distribution}  \\ 
        & \textbf{$R^2$ Med (Q1, Q3) ($\uparrow$)} & \textbf{$R^2$ Med (Q1, Q3) ($\uparrow$)} & \textbf{KL Med (Q1, Q3) ($\downarrow$) } & \textbf{KL ($\downarrow$)} \\ 
        \midrule
        Random & 0.138 & 0.173 & 58.806 & 51.310 \\
        & (0.137, 0.140) & (0.170, 0.173) & (58.771, 60.531) & \\
        \addlinespace
        VAE+OT & 0.149 & 0.114 & 55.483 & 47.910 \\
        & (0.118, 0.172) & (0.079, 0.159) & (55.410, 56.994) & \\
        \addlinespace
        \textbf{PS+OT} & \textbf{0.217} & \textbf{0.233 } & \textbf{50.967 } & \textbf{43.554} \\
        & \textbf{(0.206, 0.223)} & \textbf{(0.207, 0.250)} & \textbf{(50.898, 52.457)} & \\
        \addlinespace
        True Pairs & 0.224 & - & - & - \\
        & (0.223, 0.226) & & & \\
    \end{tabular}
    \caption{Cross-modal prediction results using CITE-seq data and PerturbSeq/single cell image data including an out of distribution distance evaluation for PerturbSeq/single cell images.}
    \label{tab:cross-modal-pred}
\end{table}
\kword{Data} We collected PerturbSeq data (200 genes) and single-cell images of HUVEC cells with 24 gene perturbations and a control perturbation, resulting in 25 total labels across both modalities. As preprocessing, we embed the raw PerturbSeq counts into a 128-dimensional space using scVI \citep{lopez2018deep} and the cell images into a 1024-dimensional space using a pre-trained Masked Autoencoder  \citep{he2022masked, kraus2023masked} to train our gene expression and image classifiers.

\kword{Model and Evaluation}  We used a fully connected 2-layer MLP as the encoder for both PerturbSeq and cell image classifiers. Similarly to the CITE-seq dataset, we evaluated the matchings based on downstream prediction of gene expression from (embeddings of) images. We used the unbiased procedure to minimize the projected loss \cref{eqn::loss} and evaluated on two held-out sets, one consisting of in-distribution samples from the 25 perturbations the classifier was trained on, and an out-of-distribution set consisting of an extra perturbation not seen in training. In the absence of ground truth matching, we assessed three distributional metrics between the actual and predicted gene expression values within each perturbation: the L2 norm of the difference in means, the Kullback-Leibler (KL) divergence, and 1-Wasserstein distance (lower indicates better alignment). We report inverse cell-count weighted averages over each perturbation group. Each metric measures a slightly different aspect of fit---the L2 norm reports a first-order deviation, while the KL divergence is an empirical estimate of the deviation of the underlying predicted distribution, while the 1-Wasserstein distance measures deviations in terms of the empirical samples themselves.

Note that matching is performed using classifiers trained on scVI embeddings, but the cross-modal predictions are generated in the original log transformed gene expression space (i.e. we predicted actual observations, not embeddings). We also evaluated distance measures on an out-of-distribution gene perturbation that was not used in either the matching or training of the translation model. 

\kword{Results} We present KL divergence values for in-distribution and out-of-distribution in \cref{tab:cross-modal-pred}.\footnote{We computation of in-distribution metrics using random subsamples from the test set. The out-of-distribution metric was computed on a small dataset with a single perturbation and subsamples were not needed.} Additional metrics show similar patterns and can be found in \cref{sec::suppresults}. OT + PS matching consistently outperforms its VAE counterpart both on in-distribution and out-of-distribution metrics, supporting our findings on the CITE-seq data to the case where ground truth pairs are not available. 

\subsection{Validation Monitor} As in our Perturb-seq and cell imaging example, the ground truth matching is typically unknown in real problems. It is  hence desirable to have an observable proxy of the matching performance as a validation during hyperparameter tuning. 
Figure \ref{fig:monitor} demonstrates that the propensity score validation loss (cross-entropy) empirically satisfies this role in our CITE-seq experiments, where lower validation loss corresponds to better matching performance, as if it were computed with the ground truth. By contrast, we found that the optimal VAE, in terms of matching, had higher validation loss. This empirically supports our intuition that the reconstruction loss minimization requires the VAE to capture modality specific information, i.e., the $U^{(e)}$ variables,
which hinders its matching performance.

\begin{figure*}[t]
    \centering
\includegraphics[width=\textwidth]{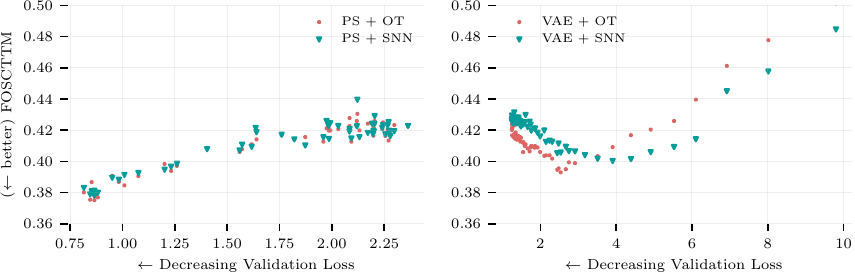}
    \caption{VAE and classifier validation metrics on the CITE-seq dataset. Notice that validation cross-entropy inversely tracks the ground truth matching metrics, and thus can be used as a proxy in practical settings where the ground truth is unknown. The same pattern does not hold for the VAE \citep{yang2021multi}, which we suspect is because reconstruction is largely irrelevant for matching.}
    \label{fig:monitor}
\end{figure*}

\section{Limitations}
Our methods are limited to settings where we have some signal to play the role of an experiment label, but we believe this is where these methods are most needed. Matching is impossible in general---e.g., if you tried to match modalities that have no shared information, it would clearly fail---but our theory formally articulates  both where we expect this method to succeed and its limitations. Both (A1) and (A2) are strong assumptions, but the empirical results suggest the method is fairly robust to failures.

\section{Conclusion}
This work presents a simple algorithm for aligning unpaired data from different
modalities using propensity scores. 
The method is very general, requiring only a classifier to be trained on each modality, and demonstrates excellent matching performance, which we validate both theoretically and empirically.
We also showcase the effectiveness of the matching algorithm in a downstream cross-modality prediction task,
achieving \emph{better} generalization compared to random matching, VAE-based matching, 
and even the ground truth matching on the evaluated dataset. 
This improved generalization over the ground truth may be attributed to implicitly enforcing invariance to modality-specific information; a rigorous investigation of this phenomenon would be interesting for further investigation.

\section{Acknowledgements}
\label{sec:ack}
We are extremely grateful for the discussions with many external collaborators and colleagues at Recursion that lead to this work. 
The original ideas for this work stemed from conversations with Alex Tong with feedback from Yoshua Bengio at Mila. We received a lot of helpful feedback from all of our colleagues at Valence Labs, especially Berton Earnshaw and Ali Denton. The single cell image experiments are built on code originally written by Oren Kraus and his team.

\bibliographystyle{abbrvnat}
\bibliography{source}

\newpage
\appendix
\section{Relaxing (A1)}

\paragraph{Relaxing Assumption 1}
\label{sec:relaxing}

Consider the propensity score
\begin{align}
    \pi(x^{(e, t)}) = P(t|X^{(e, t)} = x^{(e, t)}) 
\end{align}
where we do not necessarily require $U^{(e)} \condind t \mid Z^{(t)}$, and thus we obtain
\begin{align}
    \pi(x^{(1, t)}) = P(t| Z^{(t)} = z^{(t)}, U^{(1)} = u^{(1)} ) \neq  P(t| Z^{(t)} = z^{(t)}, U^{(2)} = u^{(2)} ) = \pi(x^{(2, t)}),
\end{align}
see the proof of \cref{prop::ps} for details.

Suppose that the two observed modalities are indeed generated by a shared $\{z_i\}_{i=1}^{n}$, but where the indices of modality $2$ are potentially permuted, and with values differing by modality specific information:
\begin{align}
    \{x^{(1,t)}_i = f^{(1)}(z_i, u_i^{(1)})\}_{i=1}^{n},  \{x^{(2,t)}_j = f^{(2)}(z_2, u_j^{(2)})\}_{j=1}^{n},
\end{align}
where $j = \pi(i)$ denotes a permutation of the sample indices. Under (A1), we would be able to find some $j$ such that $\pi(x^{(1,t)}_i) = \pi(x^{(2,t)}_j)$ for each $i$.

Matching via OT can allow us to relax (A1) in a very particular way. Consider the simple case where $t \in \{0,1\}$, so that $\pi$ can be written in a single dimension, e.g., $P(t = 1 | X^{(e, t)} = x^{(e, t)}) \in [0,1]$. In this case, exact OT is equivalent to sorting $\pi(x^{(1,t)}_i)$ and $\pi(x^{(2,t)}_j)$, and matching the sorted versions 1-to-1. Under (A1), the sorted versions will be exactly equal. A relaxed version of (A1) that would still result in the correct ground truth matching is to assume that $t$ affects $U^{(1)}$ and $U^{(2)}$ differently, but that the difference is order preserving, or monotone. Denote $(\pi(x^{(1,t)}_i), \pi(x^{(2,t)}_i)) = (\pi_i^{(1)}, \pi_i^{(2)})$ as the true pairing, noting that we use the same index $i$. We require the following:
\begin{align}
    \label{eqn::monotonicity}
    (\pi_{i_1}^{(1)} - \pi_{i_2}^{(1)})(\pi_{i_1}^{(2)} - \pi_{i_2}^{(2)}) \geq 0, \quad \forall i_1, i_2 = 1, \dots, n.
\end{align}
This says that, even if $\pi_i^{(1)} \neq \pi_i^{(2)}$, that their relative orderings will still coincide. Then, exact OT will still recover the ground truth matching. See \cref{fig::nocrossing} for a visual example of this type of monotonicity. For example, suppose that $t$ is a chemical perturbation of a cell, and thus $\pi_i^{(1)}$, $\pi_j^{(2)}$ can be seen as a measure of biological response to the perturbation, e.g., in a treated population, $\pi_{i_1} > \pi_{i_2}$ indicates samples $i_1$ had a stronger response than sample ${i_2}$, as perceived by the first modality indexed by $i$. Then, this monotonocity states that we should see the same $\pi_{j_1} > \pi_{j_2}$ in the other modality as well, if the samples $i_1$ and $i_2$ truly corresponded to $j_1$ and $j_2$.

\subsection{Cyclic Monotonicity}
\label{sec:cyclic}

We can see the monotonicity requirement \cref{eqn::monotonicity} as the
monotonicity of the function with graph $(\pi_i^{(1)}, \pi_i^{(2)}) \in
[0,1]^2$. In higher dimensions, we require that the ``graph'' satisfies the
following cyclic monotonicity property \citep{villani2009}:
\begin{definition}
    The collection $\{(\pi_i^{(1)}, \pi_{i}^{(2)})\}_{i=1}^{n}$ is said to be $c$-cyclically monotone for some cost function $c$, if for any $n = 1, \dots, N$, and any subset of pairs $(\pi_1^{(1)}, \pi_{1}^{(2)}), \dots, (\pi_n^{(1)}, \pi_{n}^{(2)})$, we have
    \begin{align}
        \sum_{n=1}^{N} c(\pi_n^{(1)}, \pi_{n}^{(2)}) \leq \sum_{n=1}^{N} c(\pi_n^{(1)}, \pi_{n+1}^{(2)}).
    \end{align}
    Importantly, we define $\pi_{n+1} = \pi_1$, so that the sequence represents a cycle. 
\end{definition}
Note in our setting, the OT cost function is the Euclidean distance, $c(x, y) =
\|x - y\|_2$. It is known that the OT solution must satisfy cyclic
monotonicity. Thus, if the true pairing is uniquely cyclically monotone, we can
recover it with OT. However, we are unaware of common violations of (A1) that would satisfy cyclic monotonicity.

\begin{figure*}[t]
    \centering    \includegraphics[width=\linewidth]{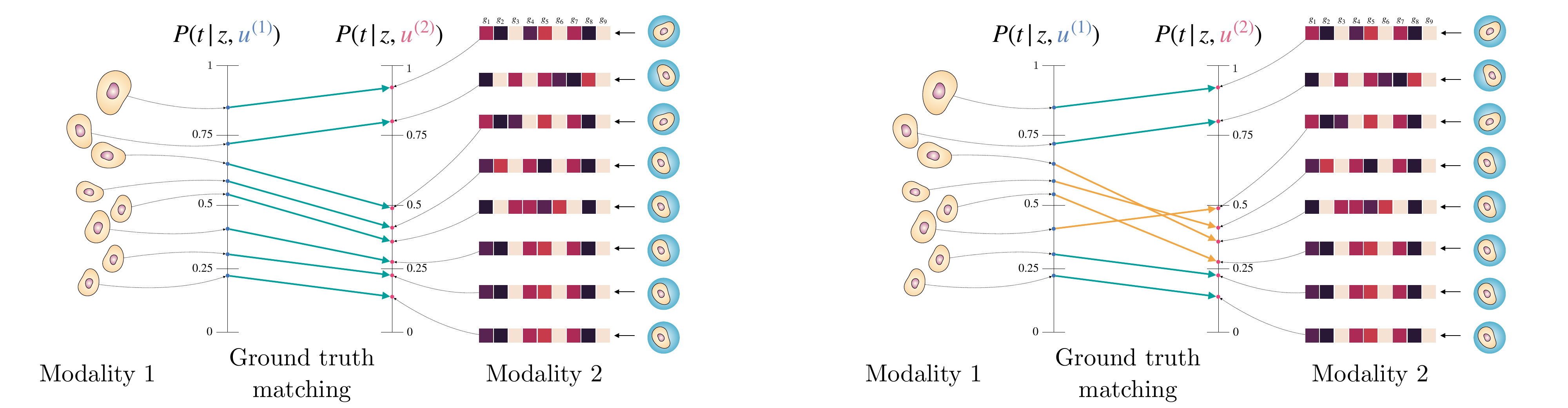}
    \vspace{-1em}
    \caption{OT matching allows for $t$ to have different effects on the modality specific information, here $u_i^{(1)}$ and $u_i^{(2)}$, as long as they can be written as transformations that preserve the relative order within modalities. Exact OT in 1-d always matches according to the relative ordering, and thus exhibits this type of ``no crossing'' behaviour shown in the figure on the left. The figure on the right shows a case where we would fail to correctly match across modalities because of the crossing shown in orange.}
    \label{fig::nocrossing}
\end{figure*}

\section{Shared Nearest Neighbours Matching}
\label{sec:shared-nn}
Using the propensity score distance, we can compute nearest neighbours both
within and between the two modalities. We follow \cite{cao2022multi} and
compute the normalized shared nearest neighbours (SNN) between each pair of
observations as the entry of the matching matrix. For each pair of observations
$(\pi_i^{(1)}, \pi_j^{(2)})$, we define four sets:
\begin{itemize}
    \item $\texttt{11}_{ij}$: the k nearest neighbours of $\pi_i^{(1)}$ amongst $\{\pi_i^{(1)}\}_{i=1}^{n_1}$. $\pi_i^{(1)}$ is considered a neighbour of itself.
    \item $\texttt{12}_{ij}$: the k nearest neighbours of $\pi_j^{(2)}$ amongst $\{\pi_i^{(1)}\}_{i=1}^{n_1}$.
    \item $\texttt{21}_{ij}$: the k nearest neighbours of $\pi_i^{(1)}$ amongst $\{\pi_j^{(2)}\}_{j=1}^{n_2}$.
    \item $\texttt{22}_{ij}$: the k nearest neighbours of $\pi_j^{(2)}$ amongst $\{\pi_j^{(2)}\}_{j=1}^{n_2}$. $\pi_j^{(2)}$ is considered a neighbour of itself.
\end{itemize}
Intuitively, if $\pi_i^{(1)}$ and $\pi_j^{(2)}$ correspond to the same
underlying propensity score, their nearest neighbours amongst observations from
each modality should be the same. This is measured as a set difference between
$\texttt{11}_{ij}$ and $\texttt{12}_{ij}$, and likewise for $\texttt{21}_{ij}$
and $\texttt{22}_{ij}$. Then, a modified Jaccard index is computed as follows.
Define
\begin{align}
    J_{ij} = |\texttt{11}_{ij} \cap \texttt{12}_{ij}| + |\texttt{21}_{ij} \cap \texttt{22}_{ij}|,
\end{align}
the sum of the number of shared neighbours measured in each modality. Then, we
compute the following Jaccard distance to populate the unnormalized matching
matrix:
\begin{align}
    \tilde{M}_{ij} = \frac{J_{ij}}{4k - J_{ij}},
\end{align}
where notice that 
$4k = |\texttt{11}_{ij}| + |\texttt{12}_{ij}| + |\texttt{21}_{ij}| + |\texttt{22}_{ij}|$, 
since each set contains $k$ distinct neighbours, 
and thus $0 \leq \tilde{M}_{ij} \leq 1$, as with the standard
Jaccard index. Then, we normalize each row to produce the final matching matrix:
\begin{align}
    M_{ij} = \frac{\tilde{M}_{ij}}{\sum_{i = 1}^{n_1} \tilde{M}_{ij}}.
\end{align}
Note $M_{ij}$ is always well defined because $\pi_i^{(1)}$ and $\pi_j^{(2)}$
are always considered neighbours of themselves.
\begin{lemma}
    $\tilde{M}_{ij}$ has at least one non-zero entry in each of its rows and columns for any number of neighbours $k \geq 1$.
\end{lemma}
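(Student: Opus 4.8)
The plan is to reduce the statement to a purely combinatorial observation about the four neighbour sets, powered by the self-membership conventions. First I would record that $\tilde M_{ij}$ is always well defined and that positivity is governed only by $J_{ij}$: each of $\texttt{11}_{ij}\cap\texttt{12}_{ij}$ and $\texttt{21}_{ij}\cap\texttt{22}_{ij}$ has cardinality at most $k$, so $0\le J_{ij}\le 2k<4k$ and the denominator $4k-J_{ij}$ is strictly positive. Hence $\tilde M_{ij}>0$ if and only if $J_{ij}\ge 1$, i.e. if and only if at least one of the two intersections is nonempty. So it suffices to exhibit, for each row index $i$, some column index $j^\star$ with $J_{ij^\star}\ge1$, and symmetrically for each column index $j$.

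The key observation I would then use is that two of the four sets do not in fact depend on both indices: as defined, $\texttt{21}_{ij}$ (the $k$ nearest modality-$2$ points to $\pi_i^{(1)}$, taken among the whole modality-$2$ sample) depends only on $i$, and $\texttt{12}_{ij}$ (the $k$ nearest modality-$1$ points to $\pi_j^{(2)}$) depends only on $j$; likewise $\texttt{11}_{ij}$ depends only on $i$ and $\texttt{22}_{ij}$ only on $j$. For the row claim, fix $i$ and choose any index $j^\star$ with $\pi_{j^\star}^{(2)}$ lying in the (nonempty, since $k\ge1$) set $\texttt{21}_{i\,\cdot}$. Then $\pi_{j^\star}^{(2)}\in\texttt{21}_{ij^\star}$ by construction, while $\pi_{j^\star}^{(2)}\in\texttt{22}_{ij^\star}$ because every point counts as a neighbour of itself. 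Thus $\texttt{21}_{ij^\star}\cap\texttt{22}_{ij^\star}\neq\emptyset$, so $J_{ij^\star}\ge1$ and $\tilde M_{ij^\star}>0$.

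The column claim is the mirror argument: fix $j$, choose any $i^\star$ with $\pi_{i^\star}^{(1)}\in\texttt{12}_{\cdot\,j}$ (nonempty since $k\ge1$), and note $\pi_{i^\star}^{(1)}\in\texttt{12}_{i^\star j}$ by construction and $\pi_{i^\star}^{(1)}\in\texttt{11}_{i^\star j}$ by the self-neighbour convention, so $\texttt{11}_{i^\star j}\cap\texttt{12}_{i^\star j}\neq\emptyset$, giving $J_{i^\star j}\ge1$ and $\tilde M_{i^\star j}>0$.

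There is no substantive obstacle: the entire content is the bookkeeping of which single index each of the four neighbour sets depends on, combined with the conventions $\pi_i^{(1)}\in\texttt{11}_{ij}$ and $\pi_j^{(2)}\in\texttt{22}_{ij}$. The only point requiring mild care is possible ties in the $k$-nearest-neighbour sets; the argument above never selects a canonical nearest neighbour, only an arbitrary element of a cross-modality neighbour set and then its companion index, so any ambiguity in the definition of the $k$-NN sets is harmless.
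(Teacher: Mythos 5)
Your proof is correct and follows essentially the same route as the paper's: fix a row $i$, use that $\texttt{21}_{ij}$ depends only on $i$ and is nonempty for $k\ge 1$, pick $\pi_{j^\star}^{(2)}$ in it, and use the self-neighbour convention to place it in $\texttt{22}_{ij^\star}$, with the mirrored argument for columns. Your added remarks (that $J_{ij}\le 2k<4k$ keeps the denominator positive, and that ties in the $k$-NN sets are harmless) are correct but minor refinements of the same argument.
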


\begin{proof}
    We prove that $J_{ij} > 0$ for at least one $j$ in each $i$, which is equivalent to $\tilde{M}_{ij} > 0$. Fix an arbitrary $i$. $\texttt{21}_{ij}$ by definition is the same set for every $j$. By the assumption of $k \geq 1$ it is non-empty, so there exists $\pi_{j^*}^{(2)} \in \texttt{21}_{ij}$. Since $\pi_{j^*}^{(2)}$ is a neighbour of itself, we have $\pi_{j^*}^{(2)} \in \texttt{22}_{ij^*}$, showing that $J_{ij^*} > 0$. The same reasoning applied to $\texttt{11}$ and $\texttt{12}$ also shows that $J_{ij}$ for at least one $i$ in each $j$.
\end{proof}

\section{Proofs} \label{sec:proof}

\subsection{Proof of \cref{prop::ps}}

\begin{proof}
Let $x^{(e)}$ denote the observed modality and $z, u^{(e)}$ be the
unique corresponding latent values. By injectivity, 
\begin{align}
      \pi(x^{(e)}) &= P(t|X^{(e)} = x^{(e)})  \nonumber \\&=  P(t| Z = z, U^{(e)} = u^{(e)} ) \nonumber \\   &=  P(t|Z = z) = \pi(z),
\end{align}
for $e = 1, 2$, since we assumed $U^{(e)} \condind t \mid Z$. Since this holds pointwise, it shows that $\pi(X^{(1}) = \pi(X^{(2)}) = \pi(Z)$ as random variables. Now,
a classical result of \citet{rubin1974estimating} gives that $Z \condind
t \mid \pi(Z)$, and that for any other function $b$ (a \textit{balancing
score}) such that $Z \condind t \mid b(Z)$, we have $\pi(Z) =
g(b(Z))$. The first property written in information theoretic terms
yields,
\begin{align}
    I(t, Z \mid \pi(Z)) = I(t, Z \mid \pi(X^{(e)})) = 0,
\end{align}
since $\pi(X^{(e)}) = \pi(Z^{(t)})$ as random variables, as required. 
\end{proof} 

\subsection{Proof of \cref{prop:dimensionality}}

\begin{proof}
In what follows, we write $\pi$ to be the restriction to its domain where
it is strictly positive. The $i$-th dimension of the propensity score can
be written as
\begin{align}
    (\pi(z))_i = p(t = i|z) = \frac{p(z|t = i)p(t = i)}{\sum_{i=0}^T p(z|t=i) p(t = i)},
\end{align}
which, when restricted to be strictly positive, maps to the relative
interior of the $T$-dimensional probability simplex. Consider the following
transformation:
\begin{align}
    h(\pi(z))_i = \log\left( \frac{(\pi(z))_i}{(\pi(z))_0} \right) \\
    = \log(p(z|t=i)) - \log(p(z|t=0)) + C,
\end{align}
where $C = \log(p(t=i)) - \log(p(t=0))$ is constant in $z$, and that
$h(\pi(z))_0 \equiv 0$. Ignoring the constant first dimension, we can view $h$
as an invertible map to $\mathbb{R}^{T}$. Under this convention, the map $h
\circ \pi: \mathbb{R}^d \to \mathbb{R}^T$ is smooth ($\log$ is smooth, and the
densities are smooth by assumption). Since it is smooth, it cannot be injective
if $T < d$ \citep{spivak2018}. Finally, since $h$ is bijective, this implies
that $\pi$ cannot be injective.
\end{proof}

\section{Experimental Details} \label{sec:exptdetails}

\subsection{Evaluation Metrics}

\label{sec:eval}

\subsubsection{Known Ground Truth}

In the synthetic image and CITE-seq datasets, a ground truth matching is known, and we can evaluate the quality of the synthetic matching directly against the truth. In these cases, the dataset sizes are necessarily balanced, so
that $n = n_1 = n_2$. In each case, we evaluate the quality of our $n \times n$
matching matrix $M$, which we compute within samples with the same $t$.
Our reported results are then averaged over each cluster. Note we randomize the
order of the datasets before performing the matching to avoid pathologies.

\kword{Trace Metric} Assuming the sample indices correspond to the true
matching, we can compute the average weight on correct matches, which is the
normalized trace of $M$:
\begin{align}
    \frac{1}{n}\text{Tr}(M) = \frac{1}{n}\sum_{i=1}^{n} M_{ii}.
\end{align}
As a baseline, notice that a uniformly random matching that assigns $M_{ij} =
1/n$ for each cell yields $\text{Tr}(M) = 1$ and hence will obtain a metric of
$1/n$. This metric however does not capture potential failure modes of
matching. For example, exactly matching one sample, while adversarially
matching dissimiliar samples for the remainder also yields a trace of $1/n$,
which is equal to that of a random matching. 

\kword{Latent MSE} On the image dataset, we have access to
the ground truth latent values that generated the images, $\mathbf{z} =
\{z_i\}_{i=1}^{n}$. We compute matched latents as $M\mathbf{z}$, the barycentric
projection according to the matching matrix. Then, to evaluate the quality of
the matching in terms of finding similar latents, we compute the MSE:
\begin{align}
    \text{MSE}(M) = \frac{1}{n}\| \mathbf{z} - M\mathbf{z} \|_{2}^2.
\end{align}

\kword{FOSCTTM} We do not have access to ground truth latents in the CITE-seq dataset, so use the Fraction Of Samples
Closer Than the True Match (FOSCTTM) \citep{demetci2022scot, liu2019} as an
alternative matching metric. First, we use $M$ to project $\mathbf{x}^{(2)}
= \{x_j\}_{j=1}^{n}$ to $\mathbf{x}^{(1)} = \{x_i\}_{i=1}^{n}$ as
$\hat{\mathbf{x}}^{(1)} = M \mathbf{x}^{(2)}$. Then, we
can compute a cross-modality distance as follows. For each point in
$\hat{\mathbf{x}}^{(1)}$, we compute the Euclidean distance to each point in
$\mathbf{x}^{(1)}$, and compute the fraction of samples in $\mathbf{x}^{(1)}$
that are closer than the true match. We also repeat this for each point in
$\mathbf{x}^{(1)}$, computing the fraction of samples in
$\hat{\mathbf{x}}^{(1)}$ in this case. That is, assuming again that the given
indices correspond to the true matching, we compute:
\begin{align}
    &\text{FOSCTTM}(M) = \nonumber \\  &\frac{1}{2n} \bigg[ \sum_{i=1}^{n} \bigg( \frac{1}{n} \sum_{j\neq i} \mathds{1}\{d(\hat{\mathbf{x}}_i^{(1)}, \mathbf{x}^{(1)}_j) < d(\hat{\mathbf{x}}_i^{(1)}, \mathbf{x}^{(1)}_i)\} \bigg)  \\  &+ \sum_{j=1}^{n} \bigg( \frac{1}{n} \sum_{i\neq j} \mathds{1}\{ d(\mathbf{x}_j^{(1)}, \hat{\mathbf{x}}_i^{(1)}) < d(\mathbf{x}^{(1)}_j, \hat{\mathbf{x}}_j^{(1)})\} \bigg) \bigg],
\end{align}
where notice that this evaluates $M$ through the computation
$\hat{\mathbf{x}}^{(1)} = M \mathbf{x}^{(2)}$. As a baseline, we should expect
a random matching, when distances between points are randomly distributed, to
have an FOSCTTM of $0.5$.

\kword{Prediction Accuracy} We also trained a cross-modality prediction (translation) model $f_{\theta, M}$ to predict CITE-seq protein levels from gene expression based on matched pseudosamples. Let $\mathbf{x^{(1)}} = \{x_i\}$, $\mathbf{x^{(2)}} = \{x_j\}$ denote protein and gene expression, respectively. We trained a simple 2-layer MLP minimizing either the standard MSE, using pairs $(x_i, \hat{x}_j)$, $\hat{x}_j \sim M_{i\cdot}$, or following the projected loss with unbiased estimates in \cref{sec::unbiased}. Each batch in general consists of samples from all $t$, but the $\hat{x}_j$ sampling step occurs within the perturbation. Let $\hat{\mathbf{x}}^{(1)}_{test} = \{f_{\theta, M}(x_{j})\}$. We report the $R^2$ on a randomly held-out test set of ground truth pairs (again, consisting of samples from all $t$), which is defined as the following:
\begin{align}
    R^2 (f_{\theta, M}) = \frac{MSE(\mathbf{x}^{(1)}_{test}, \hat{\mathbf{x}}^{(1)}_{test})}{MSE(\mathbf{x}^{(1)}_{test}, \bar{\mathbf{x}}^{(1)}_{test})},
\end{align}
where $\bar{\mathbf{x}}^{(1)}_{test}$ is the naive mean (over all perturbations) estimator which acts as a baseline.

\subsubsection{Unknown Ground Truth}

We train a cross-modality prediction model to predict gene expression from cell images based on matched pseudosamples in the same way as in CITE-seq, but only using the projected loss with unbiased estimates. Denote this model for a matching matrix $M$ by $f_{\theta, M}$.

Because we do not have access to ground truth pairs within each perturbation, we resort to distributional metrics. Let $\mathbf{x^{(1)}}_t = \{x_{i, t}\}_{i=1}^{n_{t1}}$, $\mathbf{x^{(2)}}_t = \{x_{j, t}\}_{j=1}^{n_{t2}}$ denote gene expression and cell images in a held out test set respectively in perturbation $t$. Let $\hat{\mathbf{x}}^{(1)}_t = \{f_{\theta, M}(x_{j,t})\}_{j=1}^{n_{t2}}$. We compute empirical versions of statistical divergences 
\begin{align}
    D_t(f_{\theta, M}) := D(\mathbf{x}^{(1)}_t, \hat{\mathbf{x}}^{(1)}_t),
\end{align}
where $D$ is either the L2 norm of the difference in empirical mean, empirical Kullback-Leibler divergence or 1-Wasserstein distance. We report these weighted averages of $D_t$ over the perturbations $t$ according to the number of samples in the modality of prediction interest.

\subsection{Models}

In this section we describe experimental details pertaining to the propensity
score and VAE \citep{yang2021multi}. SCOT \citep{demetci2022scot} and scGLUE
\citep{cao2022multi} are used according to tutorials and recommended default
settings by the authors. 

\kword{Loss Functions} The propensity score approach minimizes the standard
cross-entropy loss for both modalities. The
VAE includes, in addition to the standard ELBO loss (with parameter $\lambda$
on the KL term), two cross-entropy losses based on classifiers from the latent
space: one, weighted by a parameter $\alpha$ to classify $t$ as in the
propensity score, and another, weighted by a parameter $\beta$, that classifies
which modality the latent point belongs to. 

\kword{Hyperparameters and Optimization} We use the Adam optimizer with
learning rate $0.0001$ and one cycle learning rate scheduler. We follow
\citet{yang2021multi} and set $\alpha = 1$, $\beta = 0.1$, but found that
$\lambda = 10^{-9}$ (compared to $\lambda = 10^{-7}$ in \citet{yang2021multi})
resulted in better performance. We used batch size 256 in both instances and
trained for either 100 epochs (image) or 250 epochs (CITE-seq).

For the VAE and classifiers of experiment 3, we use an Adam optimizer with learning 0.001 and weight decay 0.001 and max epoch of 100 (PerturbSeq) and 250 (single cell images) using batch sizes of 256 and 2048 correspondingly. We follow similar settings as \citet{yang2021multi} and implement  $\alpha=1$ with $\lambda=10^{-9}$, and since we do not have matched data, $\beta = 0$.  For the cross-modal prediction models in experiment 3, we use Stochastic Gradient Descent optimizer with learning rate 0.001 and weight decay 0.001 with max epochs 250 and batch size 256. We implement early stopping with delay of 50 epochs which we then checkpoint the last model to use for downstream tasks

\kword{Architecture} For the synthetic image dataset, we use an 5-layer
convolutional network (channels $= 32, 54, 128, 256, 512$) with batch
normalization and leaky ReLU activations, with linear heads for classification
(propensity score and VAE) and posterior mean and variance estimation (VAE).
For the VAE, the decoder consists of convolutional transpose layers that
reverse those of the encoder. 

For the CITE-seq dataset, we use a 5-layer MLP
with constant hidden dimension $1024$, with batch normalization and ReLU
activations (adapted from the fully connected VAE in \citet{yang2021multi}) as
both the encoder and VAE decoder. We use the same architecture for both
modalities, RNA-seq (as we process the top 200 PCs) and protein. 

For the PerturbSeq classifier encoder, we use a 2-layer MLP architecture. Each layer consists of a linear layer with an output feature dimension of 64, followed by Rectified Linear Unit (ReLU) activation, Batch Normalization, and dropout (p=0.1). A final layer with Leaky ReLU activation that brings dimensionality to 128 before feeding into a linear classification head with an output feature dimension of 25.

For the single-cell image encoder classifier, we use a proprietary Masked Autoencoder \citep{kraus2023masked} to generate 1024-dimensional embeddings. Subsequently, a 2-layer MLP is trained on these embeddings. Each MLP layer has a linear layer, Batch Normalization, and Leaky ReLU activation. The output feature dimensions of the linear layers are 512 and 256, respectively, and the latent dimension remains at 1024 before entering a linear classification head with an output feature dimension of 25.

\kword{Optimal Transport} We used POT \citep{pot} to solve the entropic OT
problem, using the log-sinkhorn solver, with regularization strength $\gamma =
0.05$.

\subsection{Data}

\kword{Synthetic Data} We follow the data generating process
\cref{eqn::dgp} to generate coloured scenes of two simple objects
(circles, or squares) in various orientations and with various backgrounds. The
position of the objects are encoded in the latent variable $z$, which is
perturbed by a do-intervention (setting to a fixed value) randomly sampled for
each $t$. Each object has an $x$ and $y$ coordinate, leading to a
$4$-dimensional $z$, for which we consider $3$ separate interventions each,
leading to $12$ different settings. The modality then corresponds to whether
the objects are circular or square, and a fixed transformation of $z$, while
the modality-specific noise $U$ controls background distortions. Scenes are
generated using a rendering engine from PyGame as $f^{(e)}$. Example images are
given in \cref{fig::synimg}.

\begin{figure}
    \centering
    \includegraphics[width=0.9\linewidth]{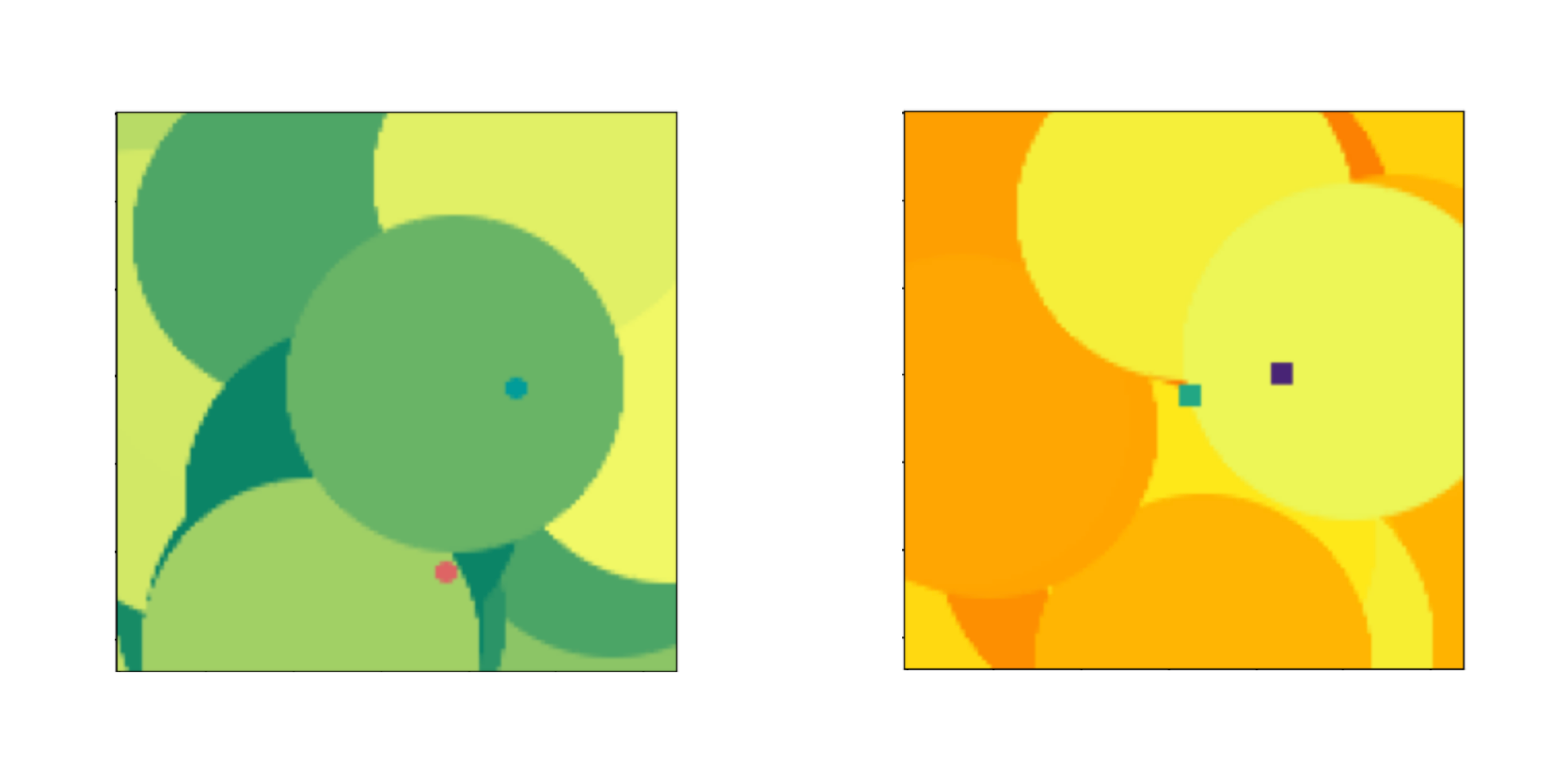}
    \caption{Example pair of synthetic images with the same underlying $z$.}
    \label{fig::synimg}
\end{figure}

\kword{CITE-seq Data} We also use the CITE-seq dataset from \citet{lance2022multimodal} as a
real-world benchmark (obtained from GEO accession GSE194122). These consist of
paired RNA-seq and surface level protein measurements, and their cell type
annotations over $45$ different cell types. We used scanpy, a standard
bioinformatics package, to perform PCA dimension reduction on RNA-seq by taking
the first 200 principal components. The protein measurements (134-dimensional)
was processed in raw form. For more details, see \citet{lance2022multimodal}.

\kword{PerturbSeq and Single Cell Image Data}  We collect single-cell PerturbSeq data (200 genes) and single-cell painting images in HUVEC cells with 24 gene perturbations and a control perturbation, resulting in 25 labels for matching across both modalities. The target gene perturbations are selected based on the 24 genes with the highest number of cells affected by the CRISPR guide RNAs targeting those genes. The PerturbSeq data is filtered to include the top 200 genes with the highest mean count, then normalized and log-transformed. The single-cell painting images are derived from multi-cell images, with each single-cell nucleus centered within a 32x32 pixel box. We use scVI \cite{lopez2018deep} to embed the raw PerturbSeq counts into a 128-dimensional space before training the gene expression classifier. Similarly, we train our image classifier using 1024-dimensional embeddings obtained from a pre-trained Masked Autoencoder \cite{kraus2023masked, he2022masked}. Following matching, we perform cross-modality translation from the single-cell embeddings to the transformed gene expression counts.

\subsection{Supplementary Results}
\label{sec::suppresults}
\begin{table}[t]
    \centering
    \footnotesize %
    \setlength{\tabcolsep}{2pt} %
    \begin{tabular}{lcccc}
        \multicolumn{1}{c}{} & \multicolumn{2}{c}{\textbf{In Distribution}} & \multicolumn{2}{c}{\textbf{Out of Distribution}} \\
        \cmidrule(lr){2-3} \cmidrule(lr){4-5}
        \multirow{2}{*}{\textbf{Method}} & \textbf{Wasserstein-1 ($\downarrow$)} & \textbf{L2 Norm ($\downarrow$)} & \textbf{Wasserstein-1 ($\downarrow$)} & \textbf{L2 Norm ($\downarrow$)} \\
        & \textbf{Med (Q1, Q3)} & \textbf{Med (Q1, Q3)} & \textbf{} & \textbf{} \\
        \midrule
        PS+OT & \textbf{4.199} & \textbf{3.280} & \textbf{5.394} & \textbf{7.219} \\
        \multicolumn{1}{r}{} & \textbf{(4.173, 4.226)} & \textbf{(3.267, 3.284)} &  &  \\
        VAE+OT & 4.339 & 3.490 & 5.629 & 7.444 \\
        \multicolumn{1}{r}{} & (4.314, 4.348) & (3.486, 3.495) &  &  \\
        Random & 4.499 & 3.826 & 6.239 & 7.793 \\
        \multicolumn{1}{r}{} & (4.478, 4.525) & (3.823, 3.828) &  &  \\
    \end{tabular}
    \caption{Wasserstein-1 and L2 norm distance values for PerturbSeq and single cell images experiments where distance is evaluated between cross-modal predictions and actual gene expression values.}
    \label{tab:exp-3-extra}
\end{table}

\end{document}